\documentclass{article}

\usepackage{spconf,amsmath,graphicx,epstopdf,amssymb,algorithm,algorithmic,float}
\usepackage{balance,amsthm,subcaption}

\newcommand\myeq{\mathrel{\overset{\makebox[0pt]{\mbox{\normalfont\tiny\sffamily (a)}}}{=}}}
\newcommand\mydots{\hbox to 1em{.\hss.\hss.}}
\newtheorem{lemma}{Lemma}
\newtheorem{theorem}{Theorem}

\algsetup{indent=1em}

\linespread{0.9}
\ninept   


\usepackage{amsmath}
\usepackage{amssymb}
\usepackage{amsthm}
\usepackage{latexsym}
\usepackage{verbatim}
\usepackage{graphicx}

\newtheorem{prop}{Proposition}




{\begin{list}               
    {$\bullet$ \hfill}{
        \setlength{\leftmargin}{\parindent}
        \setlength{\parsep}{0.04\baselineskip}
        \setlength{\itemsep}{0.5\parsep}
        \setlength{\labelwidth}{\leftmargin}
        \setlength{\labelsep}{0em}}
    }
{\end{list}}

\providecommand{\cref}[1]{Chapter~\ref{chap:#1}}

\providecommand{\norm}[1]{\left\lVert#1\right\rVert}
\providecommand{\inprod}[1]{\left\langle#1\right\rangle}
\providecommand{\set}[1]{\left\{#1\right\}}

\providecommand{\bydef}{\overset{\text{def}}{=}}

\providecommand{\rank}{\mathop{\mathrm{rank}}}

\renewcommand{\vec}[1]{\ensuremath{\mathbf{#1}}}
\providecommand{\mat}[1]{\ensuremath{\mathbf{#1}}}


 
 \providecommand{\mD}{\mat{D}}

\providecommand{\mN}{\mat{N}}
  
\providecommand{\mQ}{\mat{Q}} \providecommand{\mR}{\mat{R}}
  
 \providecommand{\mT}{\mat{T}}

 \providecommand{\vb}{\vec{b}}

\providecommand{\vm}{\vec{m}} \providecommand{\vn}{\vec{n}} 
 \providecommand{\vp}{\vec{p}}
\providecommand{\vq}{\vec{q}} \providecommand{\vr}{\vec{r}}
\providecommand{\vs}{\vec{s}}

\providecommand{\vx}{\vec{x}}

 \providecommand{\vv}{\vec{v}}




\usepackage{xcolor}

\title{Omnidirectional Bats, Point-to-Plane Distances, and\\ the Price of Uniqueness}

\name{Miranda Krekovi\'c$^{\,\dagger}$, Ivan Dokmani\'c$^{\,\ddagger}$, and Martin Vetterli$^{\,\dagger}$}
\address{\hspace{-4mm}
\begin{minipage}{.5\linewidth}
    \centering
    $^{\dagger}$
    School of Computer and Communication Sciences\\
    Ecole Polytechnique F\'ed\'erale de Lausanne (EPFL) \\CH-1015 Lausanne, Switzerland\\
    \{miranda.krekovic,martin.vetterli\}@epfl.ch
\end{minipage}%
\hspace{4mm}%
\begin{minipage}{.5\linewidth}
    \centering
    $^{\,\ddagger}$
    Institut Langevin\\
    CNRS, ESPCI Paris, PSL Research University\\
    1 rue Jussieu, 75005 Paris, France\\
    ivan.dokmanic@espci.fr
\end{minipage}%
}

\newcommand\blfootnote[1]{%
	\begingroup
	\renewcommand\thefootnote{}\footnote{#1}%
	\addtocounter{footnote}{-1}%
	\endgroup
}

\addtolength{\columnsep}{-2mm}

%
%
%
\begin{document}
%
\maketitle
\begin{abstract}

We study simultaneous localization and mapping with a device that uses reflections to measure its distance from walls. Such a device can be realized acoustically with a synchronized collocated source and receiver; it behaves like a bat with no capacity for directional hearing or vocalizing. In this paper we generalize our previous work in 2D, and show that the 3D case is not just a simple extension, but rather a fundamentally different inverse problem. While generically the 2D problem has a unique solution, in 3D uniqueness is always absent in rooms with fewer than nine walls. In addition to the complete characterization of ambiguities which arise due to this non-uniqueness, we propose a robust solution for inexact measurements similar to analogous results for Euclidean Distance Matrices. Our theoretical results have important consequences for the design of collocated range-only SLAM systems, and we support them with an array of computer experiments.
\end{abstract}

\begin{keywords}%
Collocated source and receiver, first-order echoes, indoor localization, room geometry reconstruction, point-to-plane distance matrix (PPDM), SLAM.
\end{keywords}%
\blfootnote{This work was support by the Swiss National Science Foundation grant number 20FP-1 151073, ``Inverse Problems regularized by Sparsity''. ID was funded by LABEX WIFI (Laboratory of Excellence within the French Program ``Investments for the Future'') under references ANR-10-LABX-24 and ANR-10-IDEX-0001-02 PSL* and by Agence Nationale de la Recherche under reference ANR-13-JS09-0001-01.}

\section{Introduction}
\label{sec:intro}


Imagine an omnidirectional bat who pilots indoors by listening to echoes of its chirps, without any idea at all about where the echoes are coming from. This unusual bat faces a conundrum which is the theme of our paper: given the distances between a set of waypoints and a set of walls in a room, can we reconstruct both the trajectory and the shape of the room? Such simultaneous recovery belongs to a class of problems famously known as simultaneous localization and mapping, or SLAM \cite{thrun}.  

Prior studies have demonstrated that multipath conveys essential information about the room geometry and that this geometry can be estimated from room impulse responses (RIRs) \cite{meissner, dokthesis, antonacci,dokmanicdaudet}. Many common setups consider multiple sources or microphone arrays \cite{ribeiro, hu}. Assuming that the microphones are static, geometric relationships between the propagation times enable us to relate the room estimation problem to the so-called inverse problem for Euclidean distance matrices (EDMs), which aims to reconstruct the points in a set from their pairwise distances \cite{liberti}. Thanks to a wide range of applications, a number of tools related to EDMs have been developed to  reconstruct the original points from noisy and incomplete distances \cite{gaffke, reza, browne}.

A different approach is to assume a single omnidirectional sound source and a single omnidirectional microphone collocated on one single device \cite{krekovic,krekovic2,peng}. There are several advantages of such a setup: First, we do not assume pre-installed fixed beacons in the room. Second, we work with first-order echoes only, which are relatively easy to measure or estimate. Third, we do not require any knowledge about the trajectory of the device or the measurement locations.

A particularity of the described setup is that the propagation times directly reveal the distances between the measurement locations and the walls, and unlike the most common setups, the problem is not a specification of the inverse problem for EDMs. Nevertheless, it is related to a similar inverse problem that has not yet been studied---given the matrix of noisy point-to-plane distances (a PPDM), reconstruct the generating points and planes.

In our previous work, we characterized this inverse problem in 2D \cite{krekovic}. In this paper we generalize our prior work in two major ways. First, we expand the study from 2D to 3D. This expansion is non-trivial---it so happens that the question of uniqueness has a fundamentally different answer in 3D. We show that new ambiguities arise in addition to the usual invariance to rigid transformations characteristic of EDMs. We identify the equivalence classes of rooms and trajectories yielding the same measurements. Our analysis is exhaustive in that we obtain a complete if and only if characterization. Finally, we propose an optimization-based estimator for the noisy case which can be efficiently computed using off-the-shelf optimization tools.

\begin{figure}
\begin{center}
  \includegraphics[width=7.1cm]{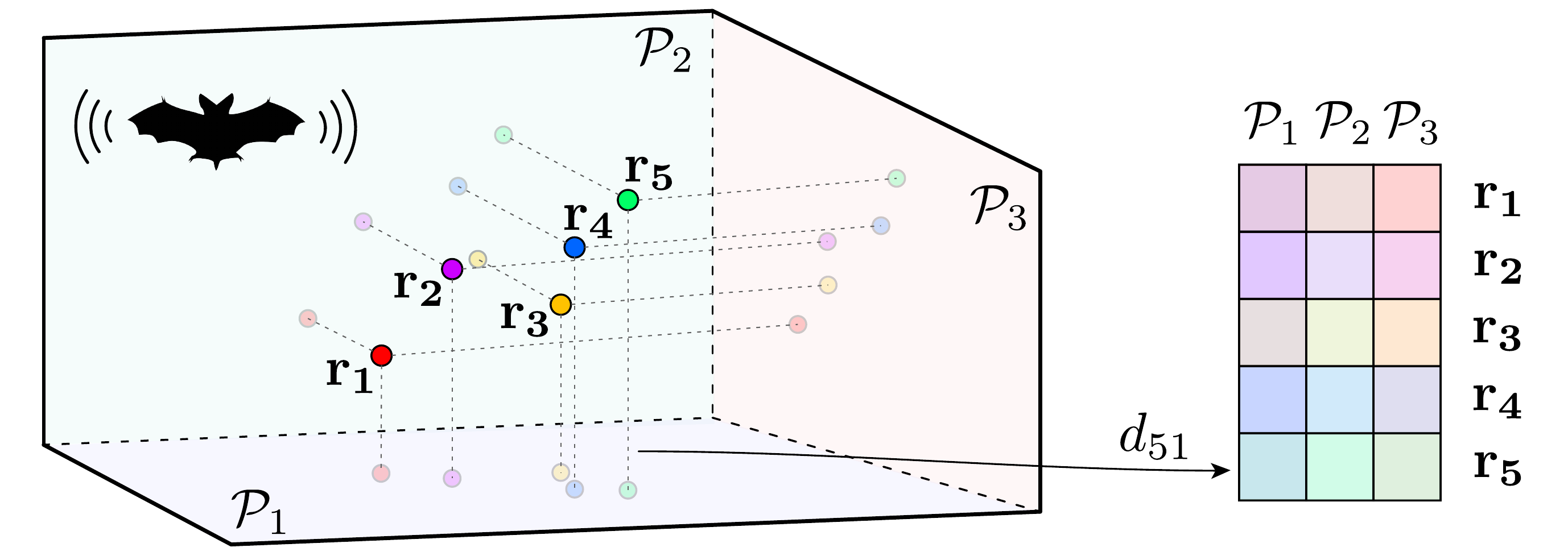}
    \caption{Illustration of $N=5$ points $\mathbf{r}_i$ and $K=3$ planes ${\cal P}_j $ with the corresponding distance matrix.}
    \vspace{-1.8em}
    \label{fig:setup}
 \end{center}
 \end{figure}

\section{Problem setup}
\label{sec:problem_setup}
We consider a scenario as in Fig. \ref{fig:setup} with $N$ waypoints $\{ {\mathbf{r}_i} \}_{i=1}^N$ and $K$ walls (planes) $\{ { {\cal P}_j } \}_{j=1}^K$.
We work with the parameterization of the plane ${\cal P}_j$ in Hessian normal form $\inprod{\vn_j, \vx } = q_j$, where $\mathbf{n}_j$ is a unit normal, $q_j = \inprod{\vn_j, \vp_j}$ is the distance of the plane from the origin and $\mathbf{p}_j$ is any point on the plane ${\cal P}_j$. 
Our measurements are the distances between the waypoints and planes:
\begin{equation}
\label{eq:distdef}
d_{i,j} = \mathrm{dist}(\vr_i, {\cal P}_j) = q_j - \langle \vr_{i},  \vn_{j} \rangle
\end{equation}
for  $i = 1, ...,N$ and $j = 1, ..., K$. We choose $N$ so that $N \geq K$, and define $\mD  \in \mathbb{R}^ {N
\times K}$ to be the \textit{points-to-planes distance matrix} (PPDM) with entries $d_{i,j}$.

\subsection{Rank of the PPDM}

Our first result is a simple proposition about the rank of the introduced PPDM, similar in spirit to the rank property for EDMs:

\begin{prop}
With $\mD$ defined as above, we have
\begin{equation}
\rank(\mD) \leq d + 1,
\end{equation}
where $d$ is the dimension of the space.
\end{prop}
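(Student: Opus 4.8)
The plan is to exhibit $\mD$ explicitly as the sum of a rank-one matrix and a matrix of rank at most $d$, and then invoke subadditivity of rank. First I would collect the generating data into matrices: let $\mR \in \R^{N \times d}$ have the waypoints $\vr_i^\top$ as its rows, let $\mN \in \R^{d \times K}$ have the unit normals $\vn_j$ as its columns, and let $\vq = (q_1, \dots, q_K)^\top$. Reading off the definition of $d_{i,j} = q_j - \inprod{\vr_i, \vn_j}$ entrywise, this is precisely the $(i,j)$ entry of $\mathbf{1}_N \vq^\top - \mR\mN$, where $\mathbf{1}_N$ is the all-ones column vector, so
\begin{equation}
\mD = \mathbf{1}_N \vq^\top - \mR\mN.
\end{equation}

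The second step is to bound the rank of each term. The outer product $\mathbf{1}_N \vq^\top$ has rank at most one, while $\mR\mN$, being the product of two matrices with inner dimension $d$, has rank at most $d$. By subadditivity of rank under addition, $\rank(\mD) \le \rank(\mathbf{1}_N\vq^\top) + \rank(\mR\mN) \le 1 + d$, which is exactly the claim.

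There is no serious obstacle here; the only point to get right is the bookkeeping that isolates the constant-in-$i$ term $q_j$ as the source of the additive ``$+1$'', while the bilinear term $\inprod{\vr_i,\vn_j}$ supplies the factor of $d$. One could argue equivalently at the level of column spaces: every column of $\mD$ equals $q_j \mathbf{1}_N - \mR\vn_j$, hence lies in the span of $\mathbf{1}_N$ together with the range of $\mR$, a subspace of dimension at most $d+1$, giving the same bound. It is worth noting that the inequality is generically tight but can be strict in degenerate configurations (e.g.\ collinear normals, or waypoints confined to a lower-dimensional flat), which is why the statement is phrased as an upper bound rather than an equality.
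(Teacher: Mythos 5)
Your proof is correct and is essentially identical to the paper's: both write $\mD$ as the sum of the rank-$\le d$ bilinear term $-\mR\mN$ (the paper writes $-\mR^T\mN$ with points as columns, a purely notational difference) and the rank-one term $\mathbf{1}\vq^\top$, then apply subadditivity of rank. Your added remarks on the column-space view and on when the bound is tight go slightly beyond the paper's proof but do not change the argument.
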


\begin{proof}
  Denoting $\mN = [\vn_1, \ldots, \vn_K]$ and $\mR = [\vr_1, \ldots, \vr_N]$, we can write $\mD$ as
  \begin{equation}
    \mD = - \mR^T \mN + \vec{1} \vq^T.
  \end{equation}
  Since $\rank(\mR^T \mN)\leq d$ and
  $\rank(\vec{1} \vq^T) = 1$, the statement follows by the rank inequalities.
\end{proof}
In other words, rank of a PPDM is independent of the number of points and planes that generate it.
In real situations the measurements are unreliable: distances are noisy, it is impossible to obtain them all or they are unlabelled. That results in a noisy and incomplete matrix $\mD$.
The low-rank property (or its approximate version in the noisy case) gives us a simple heuristic for denoising $\mD$ and estimating the unobserved distances.

\section{Uniqueness of the inverse problem}

In this section we present a study of the uniqueness of the inverse problem for PPDMs. As we will see, going from 2D to 3D brings about an important change in the character of this problem.

\subsection{Invariance to Rigid Motions}

For completeness we state the following intuitive result:

\begin{lemma}
    \label{lem:rigid}
    PPDMs are invariant to rigid motions of the plane-point setup.
\end{lemma}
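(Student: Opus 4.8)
The plan is to make the notion of a rigid motion explicit and then verify directly that each entry $d_{i,j}$ of \eqref{eq:distdef} is left unchanged when the motion is applied simultaneously to the waypoints and the planes. A rigid motion of $\R^d$ can be written as $\vx \mapsto \mQ\vx + \vt$, where $\mQ$ is orthogonal ($\mQ^T\mQ = \mI$) and $\vt \in \R^d$ is a translation. I would apply this same map to the whole configuration, so that each waypoint becomes $\vr_i' = \mQ\vr_i + \vt$ and each plane $\mathcal{P}_j$ is carried to its image $\mathcal{P}_j'$.

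The one step that needs care is tracking how the Hessian-normal parameters $(\vn_j, q_j)$ transform. Since $\mQ$ is orthogonal, the image of a unit normal is again a unit vector, so the transformed normal is $\vn_j' = \mQ\vn_j$. For the offset I would take any point $\vp_j$ on $\mathcal{P}_j$, observe that its image $\mQ\vp_j + \vt$ lies on $\mathcal{P}_j'$, and compute $q_j' = \inprod{\vn_j', \mQ\vp_j + \vt} = \inprod{\mQ\vn_j, \mQ\vp_j} + \inprod{\mQ\vn_j, \vt} = q_j + \inprod{\mQ\vn_j, \vt}$, where the first term collapses via $\mQ^T\mQ = \mI$ to $\inprod{\vn_j, \vp_j} = q_j$.

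Substituting the transformed quantities into \eqref{eq:distdef}, the new distance is
\begin{equation}
d_{i,j}' = q_j' - \inprod{\vr_i', \vn_j'} = q_j + \inprod{\mQ\vn_j,\vt} - \inprod{\mQ\vr_i + \vt,\, \mQ\vn_j}.
\end{equation}
Expanding the last inner product and invoking orthogonality once more gives $\inprod{\mQ\vr_i,\mQ\vn_j} = \inprod{\vr_i,\vn_j}$, while the two terms carrying $\vt$ cancel exactly. What remains is $q_j - \inprod{\vr_i,\vn_j} = d_{i,j}$, so $\mD' = \mD$ entrywise and the claim follows. There is no genuine obstacle beyond the bookkeeping; the only place one could slip is in the transformation of $q_j$, where it is easy to overlook the translation-dependent correction $\inprod{\mQ\vn_j,\vt}$, which is precisely the term that cancels the effect of the translation acting on $\vr_i$.
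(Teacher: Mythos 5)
Your proof is correct and follows essentially the same route as the paper's: a direct verification that the distance formula \eqref{eq:distdef} is unchanged, using orthogonality of $\mQ$ to collapse the rotated inner products and observing that the translation terms cancel. The only cosmetic difference is that you parameterize the motion as $\vx \mapsto \mQ\vx + \vt$ and explicitly track the transformed offset $q_j'$, whereas the paper writes the motion as $\vx \mapsto \mQ(\vx + \vb)$ and never isolates $q_j'$; the computation is the same.
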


\begin{proof}
We represent rotations and reflections by an orthogonal matrix $\mathbf{Q} \in \mathbb{R}^{d \times d}$ and translations by the vector $\vb$ that acts on the points $\vp_j$ and $\vr_i$. We further denote by $d_{i,j}'$ the transformed distances. Then, we can write
\begin{align}
d_{i,j}' = &   \inprod{\mQ\vn_j,  \mQ (\vp_j + \vb)} - \inprod{\mQ\vn_j, \mQ(\vr_i + \vb)} \nonumber \\
\myeq & \text{ } \inprod{\vn_j,  \vp_j + \vb} - \inprod{\vn_j, \vr_i + \vb} \nonumber \\
= & \text{ }  \inprod{\vn_j, \vp_j }  - \inprod{\vn_j, \vr_i} \nonumber
= \text{ } d_{i,j}
\end{align}
where (a) follows from the orthogonality of $\mQ$.
\end{proof}

A consequence of this invariance is that the absolute position and orientation of points and planes cannot be recovered from distances only, and the corresponding degrees of freedom need to be specified separately. 

Equivalent result is known to hold for EDMs. However, in contrast to EDMs where the invariance to rigid transformations is the only one, the inverse problem of retrieving points and planes from a PPDM exhibits additional ambiguities.

\subsection{Invariances Beyond Rigid Motions}

To study the uniqueness of the inverse problem for PPDMs, we first state a result that transforms the question of uniqueness into a question about nullspace dimension of certain matrices. We will denote room-trajectory setups as pairs of planes and waypoints: ${\cal R} = (\set{{\cal P}_j}), \set{\vr_i})$, and the corresponding PPDMs as $\mD({\cal R})$.

\begin{lemma} 
Let ${\cal R}_1= (\set{{\cal P}_j}), \set{\vr_i})$ and ${\cal R}_2= (\set{{\cal Q}_j}), \set{\vs_i})$ be two room-trajectory setups with the corresponding normals being $\set{\vn_{j}}$ and $\set{\vm_{j}}$. Then $\mD({\cal R}_1) = \mD({\cal R}_2)$ if and only if
\[
\mR_0^T \mN_0 = \vec{0}
\]
where
$ \mathbf{R}_0 \bydef
\begin{bmatrix}
\vr_1  & \hdots & \vr_N  \\
-\vs_1  & \hdots & -\vs_N  \\
\end{bmatrix}, \ 
\mathbf{N}_0 \bydef \begin{bmatrix}  
\vn_{1}  & \hdots & \vn_{K} \\
\vm_{1}  & \hdots & \vm_{K}  
\end{bmatrix}. $
\label{lemma1}
\end{lemma}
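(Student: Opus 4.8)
The plan is to reduce the equality $\mD(\mathcal{R}_1)=\mD(\mathcal{R}_2)$ to a single matrix identity, reusing the factorization behind Proposition~1. Writing $\mN=[\vn_1,\dots,\vn_K]$, $\mR=[\vr_1,\dots,\vr_N]$ for $\mathcal{R}_1$ and $\mM=[\vm_1,\dots,\vm_K]$, $\mS=[\vs_1,\dots,\vs_N]$ for $\mathcal{R}_2$, and collecting the plane offsets $q_j$ of $\mathcal{R}_1$ and $\widetilde{q}_j$ of $\mathcal{R}_2$ into vectors $\vq$ and $\wt{\vq}$, the same decomposition gives
\[
\mD(\mathcal{R}_1)=-\mR^T\mN+\vec{1}\vq^T,\qquad \mD(\mathcal{R}_2)=-\mS^T\mM+\vec{1}\wt{\vq}^T.
\]
The key observation is that the block product appearing in the statement is exactly the difference of the two bilinear terms: reading off the $(i,j)$ entry of $\mR_0^T\mN_0$ from the block structure gives $\langle\vr_i,\vn_j\rangle-\langle\vs_i,\vm_j\rangle$, so that $\mR_0^T\mN_0=\mR^T\mN-\mS^T\mM$. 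I would verify this single entrywise computation first, since everything downstream hinges on it.

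Next I would subtract the two displays, which yields
\[
\mD(\mathcal{R}_1)-\mD(\mathcal{R}_2)=-\mR_0^T\mN_0+\vec{1}(\vq-\wt{\vq})^T .
\]
Hence the two PPDMs coincide if and only if $\mR_0^T\mN_0=\vec{1}(\vq-\wt{\vq})^T$. Both directions are then immediate up to the offset term: if the right-hand side vanishes the two matrices are manifestly equal, and conversely equality forces the product to equal the rank-one, column-constant matrix $\vec{1}(\vq-\wt{\vq})^T$.

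The only genuine obstacle is disposing of this offset term so that the clean homogeneous condition $\mR_0^T\mN_0=\vec{0}$ emerges rather than a rank-one equality. Here I would invoke Lemma~\ref{lem:rigid}: since the PPDM is invariant under rigid motions, I may translate each setup so that a common reference waypoint sits at the origin, say $\vr_1=\vs_1=\vec{0}$. Reading the corresponding row of the PPDM equality then gives $q_j=d_{1,j}=\widetilde{q}_j$ for every $j$, i.e. $\vq=\wt{\vq}$, and the condition collapses to $\mR_0^T\mN_0=\vec{0}$. Equivalently, one can work from the outset with the gauge-invariant differences $\vr_i-\vr_1$ and $\vs_i-\vs_1$, which make the offsets cancel automatically. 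I expect the bookkeeping of this translational gauge to be the delicate step, as it is precisely what guarantees the homogeneous form of the conclusion; the remainder is the block multiplication above together with a direct comparison of terms.
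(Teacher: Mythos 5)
Your proof is correct, and while it rests on the same two pillars as the paper's---reducing $\mD({\cal R}_1)=\mD({\cal R}_2)$ to the homogeneous bilinear condition, and invoking Lemma~\ref{lem:rigid} to neutralize the plane offsets---the route you take through them is genuinely different and, in execution, cleaner. The paper argues entrywise: it first imposes the side condition \eqref{eq:side_simple}, then treats the general case by introducing the discrepancies $w_{i,j}$, observing that they can depend only on $j$, encoding them as $w_j=\vv_j^T\vn_j$ via auxiliary vectors, and comparing nullspaces of the enlarged matrices $\mathbf{R}$, $\mathbf{N}$ of \eqref{eq:rn} with $\mathcal{N}(\mathbf{N}_0^T)$ to argue that every inhomogeneous solution is a translate of a homogeneous one. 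You instead recycle the rank factorization from Proposition~1, so that subtracting the two PPDMs yields the exact identity $\mD({\cal R}_1)-\mD({\cal R}_2)=-\mR_0^T\mN_0+\vec{1}(\vq-\wt{\vq})^T$, and you then remove the offset term with the reference-waypoint gauge $\vr_1=\vs_1=\vec{0}$, whose first row forces $\vq=\wt{\vq}$. One point worth making explicit, since it is what makes the gauge step airtight: the joint translation by $(-\vr_1,-\vs_1)$ subtracts the first row of $\mR_0^T\mN_0$ from every row, so it carries column-constant products (which is exactly what PPDM equality gives you) to the zero product, and leaves a vanishing product vanishing; nothing is gained or lost in passing to the gauge. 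What each approach buys: yours delivers the equivalence in two lines and bypasses the most delicate step of the paper's argument (the passage from the $\vv_j$-dependent solutions of \eqref{eq:rn} to a single translation vector $\vv$), while the paper's longer bookkeeping exhibits the structure of the solution set explicitly, which is what its subsequent equivalence-class analysis consumes. Finally, you inherit, rather than introduce, the one imprecision in the lemma itself: $\mR_0^T\mN_0=\vec{0}$ constrains only waypoints and normals, so the ``if'' direction additionally requires choosing the free offsets so that $\vq=\wt{\vq}$ (identical waypoints and normals with different $q_j$ satisfy the condition yet give different PPDMs). The paper dispatches this by remarking that \eqref{eq:side_simple} can always be solved; your ``manifestly equal'' step should appeal to the same freedom.
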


\begin{proof}
Assuming that $d_{i,j}({\cal R}_1)  = d_{i,j}({\cal R}_2) $, we obtain using \eqref{eq:distdef} that
\begin{equation}
\big< \vp_j, \vn_j \big> - \big< \vr_i, \vn_j \big>  = \big< \vq_j, \vm_j \big> - \big< \vs_i, \vm_j \big> \quad \forall i, j.
\label{eq:main}
\end{equation}
Instead of studying \eqref{eq:main} directly, we assume that 
\begin{equation}
\big< \vp_j, \vn_j \big> = \big< \vq_j, \vm_j \big>,
\label{eq:side_simple}
\end{equation}
so we arrive at the simplified relation
\begin{equation}
\big< \vr_i, \vn_j \big> =  \big< \vs_i, \vm_j \big>.
\label{eq:main_simple}
\end{equation} 

To solve \eqref{eq:side_simple} and \eqref{eq:main_simple}, we first search for $\vn_j,  \vm_j, \vr_i$ and $\vs_i$ that satisfy \eqref{eq:main_simple}. Then, given a set of normals $\vn_j$ and $\vm_j$, we obtain one equation \eqref{eq:side_simple} with $2d$ unknown variables $\vp_j$ and $\vq_j$ for each $j$.
Therefore, as we can always find a solution to the linear equations \eqref{eq:side_simple}, we focus on solving \eqref{eq:main_simple}, which can be written in matrix form as
\begin{equation}
\mathbf{R}_0^T \mathbf{N}_0 = \mathbf{0}.
\label{eq:rono}
\end{equation}
The solution exists when the columns of $\mathbf{N}_0$ are in the nullspace of $\mathbf{R}_0$ and the rows of $\mathbf{R}_0$ are in the nullspace of $\mathbf{N}_0^T$.

In the general case when $ \big< \vr_i, \vn_j \big> \neq  \big< \vs_i , \vm_j \big>$ we denote the difference $\big< \vp_j, \vn_j \big>  -  \big< \vq_j, \vm_j \big> = \big< \vr_i, \vn_j \big> - \big< \vs_i , \vm_j \big> = w_{i,j}$ and obtain the equations:
\begin{align}
\big< \vr_i, \vn_j \big> = \big< \vs_i , \vm_j \big> + w_{i,j}  \label{eq:proof1_1} \\
\big< \vp_j, \vn_j \big> =  \big< \vq_j , \vm_j \big> + w_{i,j} \label{eq:proof1_2}
\end{align}
We notice that all variables in \eqref{eq:proof1_2} depend only on the plane index, so we require the same for the introduced variable: $w_{i,j} = w_j$, $\forall i$. As any real number can be written as an inner product of some vectors, we let $\mathbf{v}_j \in \mathbb{R}^d$ be any vector such that $w_j = \vv^T_j \vn_j$. Then, from \eqref{eq:proof1_1} we obtain $\big< \vr_i - \vv_j, \vn_j \big> = \big< \vs_i, \vm_j \big>$ which can be written in matrix form as
\begin{equation}
\mathbf{R}^T \mathbf{N} = \mathbf{0},
\label{eq:rn}
\end{equation}
\[ \mathbf{R} =
\begingroup
\renewcommand*{\arraystretch}{1}
\begin{bmatrix}
\vr_{1} - \vv_{1} & \hspace{-2px} \mydots & \hspace{-2px} \vr_{1} - \vv_{K} & \hspace{-2px} \mydots & \hspace{-2px} \vr_{N} - \vv_{1} & \hspace{-2px} \mydots & \hspace{-2px} \vr_{N} - \vv_{K}  \\
-\vs_{1} & \hspace{-2px} \mydots & \hspace{-2px} -\vs_{1}  & \hspace{-2px} \mydots & \hspace{-2px} -\vs_{N} & \hspace{-2px} \mydots & \hspace{-2px} -\vs_{N} 
\end{bmatrix} \endgroup
, \]
\[\mathbf{N} = \begin{bmatrix}  
\mathbf{N}_0 & \mathbf{N}_0 \hdots & \mathbf{N}_0
\end{bmatrix}. \]

We now show that the assumption \eqref{eq:side_simple} does not reduce generality and prove that \eqref{eq:rono} gives the same characterization of the uniqueness property as the general case \eqref{eq:rn}.

As mentioned, the solution of \eqref{eq:rono} exists when the rows of $\mathbf{R}_0$ are in the nullspace of $\mathbf{N}_0^T$,

\begin{equation}
\begin{bmatrix} \mathbf{r}_i  \\ -\mathbf{s}_i \end{bmatrix} \in \mathcal{N}(\mathbf{N}_\mathbf{0}^T) = \mathbf{A} \mathbf{c}_i,
\end{equation}
where $\mathbf{A} \in \mathbb{R}^{2d \times d}$ is the matrix of basis vectors spanning $\mathcal{N}(\mathbf{N}_\mathbf{0}^T)$ arranged in the columns, and $\mathbf{c}_i \in \mathbb{R}^{d}$ is the vector of coefficients. 
Analogously, for \eqref{eq:rn} to have a non-zero solution, rows of the matrix $\mathbf{R}$ must satisfy
\begin{equation}
\begin{bmatrix} \mathbf{r}_i - \mathbf{v}_j \\ -\mathbf{s}_i \end{bmatrix}  \in \mathcal{N}(\mathbf{N}^T) \myeq \mathbf{A} \mathbf{c}_{i,j},
\label{eq:generalization}
\end{equation}
where (a) follows from $\mathcal{N}(\mathbf{N}_0^T) = \mathcal{N}(\mathbf{N}^T)$. Therefore, a solution has the form
\begin{equation}
\begin{bmatrix}   \mathbf{r}_i  \\ -\mathbf{s}_i \end{bmatrix}  = \begin{bmatrix} \mathbf{v} \\ \mathbf{0} \end{bmatrix} + \mathbf{A}  \mathbf{c}_{i},
\end{equation}
where $\mathbf{v}$ is a translation vector of $\mathbf{r}_i$. But by Lemma \ref{lem:rigid}, PPDMs are invariant translations so assumption \eqref{eq:side_simple} indeed does not remove any solutions.
\end{proof}

In the following, we analyze which room-trajectory setups verify the conditions of Lemma \ref{lemma1}. 
That is, we study solutions of \eqref{eq:rono} by searching for vectors  $\vr_i$ and $\vs_i$ that live in $\mathcal{N}(\mathbf{N}_0^T)$, at the same time constraining
\begin{equation}
 \mathbf{n}_j  =
\begin{bmatrix} \sin\theta \\ \cos\theta \end{bmatrix}, 
 \mathbf{m}_j  =
\begin{bmatrix} \sin\theta'  \\ \cos\theta' \end{bmatrix},
\end{equation}
in 2D and
\begin{equation}
\mathbf{n}_j  =
\begin{bmatrix} \sin\theta \cos\phi \\ \sin\theta \sin\phi \\ \cos\theta \end{bmatrix}, 
 \mathbf{m}_j  =
\begin{bmatrix} \sin\theta' \cos\phi' \\ \sin\theta' \sin\phi' \\ \cos\theta' \end{bmatrix},
\label{eq:normals}
\end{equation}
in 3D to ensure that normals remain unit vectors. Generically, for $K \geq 2d$, the nullspace is empty. To make it nonempty, we must explicitly assume linear dependencies among the columns or rows.

The analysis differs for 2D and 3D spaces and we conduct it separately. Since the analysis is straightforward but cumbersome, in consideration of limited space we prefer to present a sketch, give intuitions and illustrate rooms and trajectories that lead to same measurements, and to defer the details to a forthcoming journal version of this paper \cite{krekovic_journal}.

\vspace{-0.5em}
\subsubsection{2D space}

We distinguish two different cases that lead to valid rooms. First, we assume that the affine dimension of points $\mathbf{r}_i$ and $\mathbf{s}_i$ is lower than the ambient dimension, so that the measurement locations are collinear. Then, for any room we can find a set of equivalent rooms with respect to distance measurements. An example is given as follows:  

\vspace{0.75em}
\centerline{\includegraphics[width=7cm]{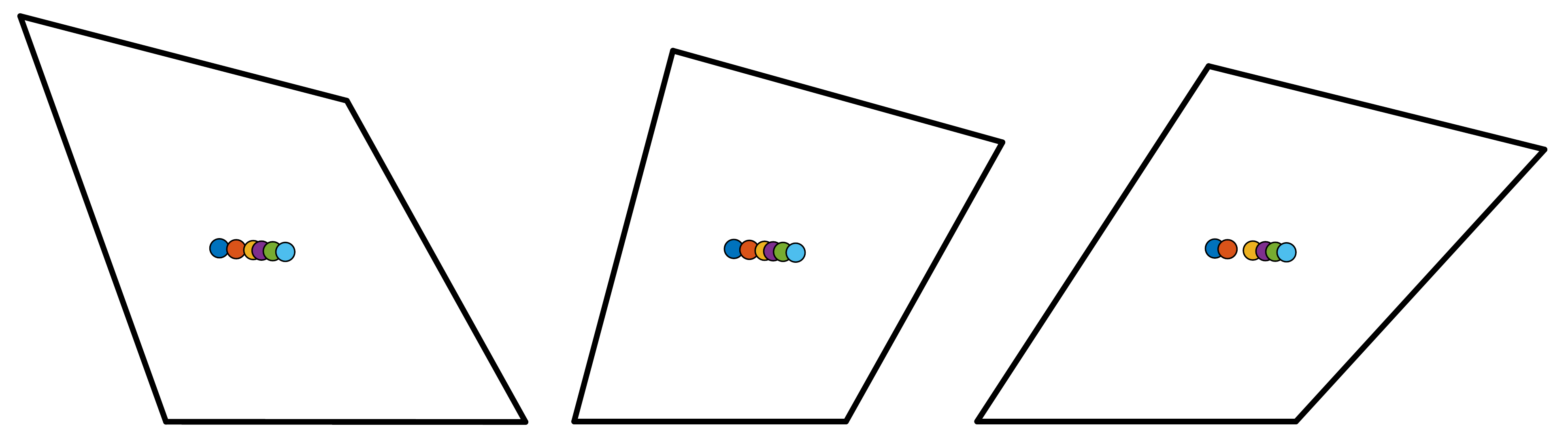}}
\vspace{0.75em}
Times of arrivals of the first-order echoes recorded at marked locations coincide in all three rooms.

Second, we focus on points with affine dimension equal to the ambient dimension and assume that $K \geq 2d$. By enforcing linear dependence among the rows of the matrix $\mathbf{N}_0^T$ (so that it has a non-empty nullspace), we obtain new classes of rooms-trajectory pairs $\cal R$ that give the exact same PPDM. Further analysis shows that the rooms in this class are parallelograms. For example, the following:

\vspace{0.75em}
\centerline{\includegraphics[width=7cm]{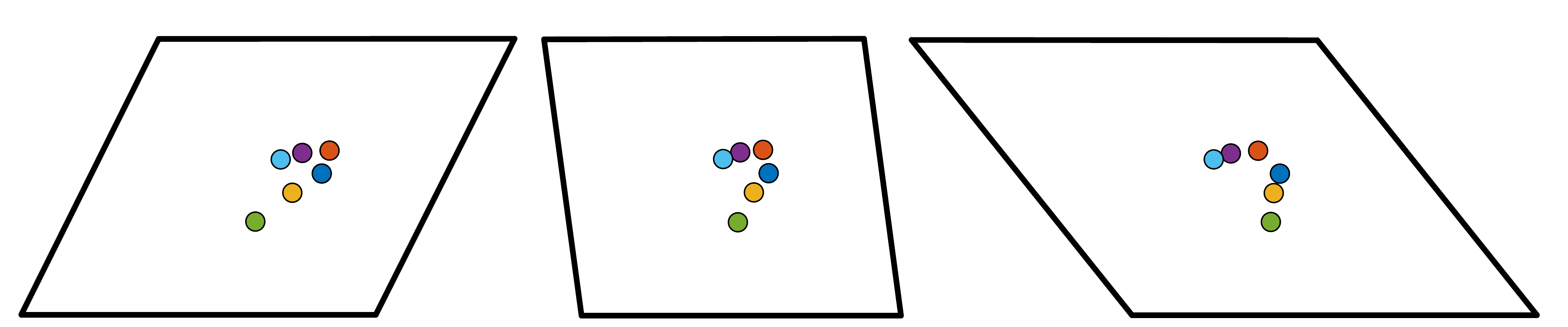}}%
\vspace{0.75em}%
\noindent give the same measurements.

\vspace{-0.5em}
\subsubsection{3D space}
In 3D, we represent the plane normals $\mathbf{n}_j$ and $\mathbf{m}_j$ in spherical coordinates
and obtain three different classes of rooms with equivalent propagation times of first-order echoes.

The first equivalence class of room-trajectory pairs with equal PPDMs is derived analogously to 2D. Points with affine dimension lower than the ambient dimension always result in a non-unique set of distance measurements. In 3D, this comprises collinear and coplanar waypoints.
For example, two rooms from this class  are:

\vspace{1em}
\centerline{\includegraphics[width=7cm]{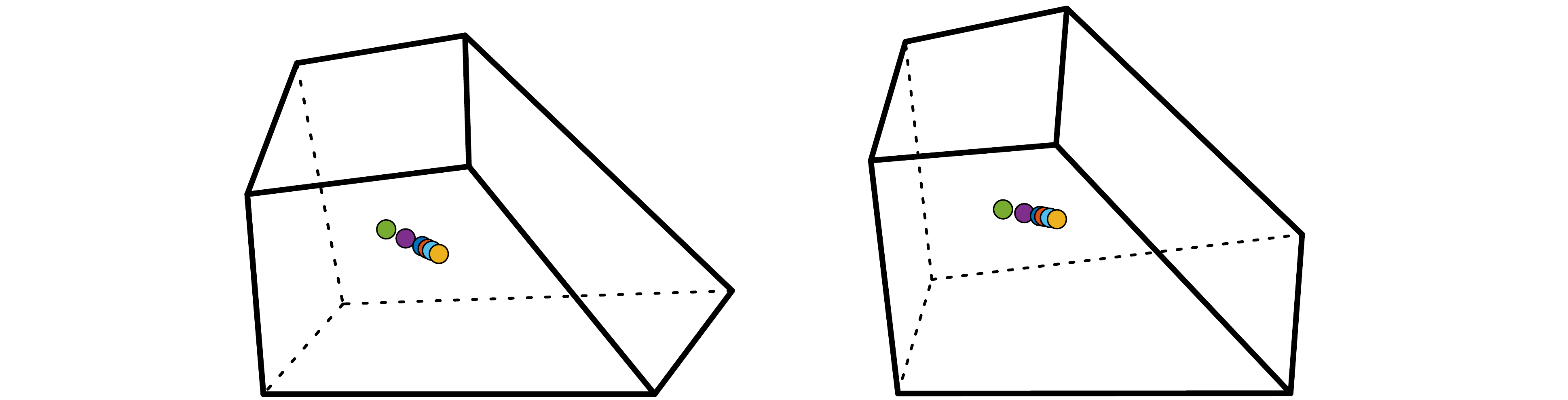}}
\vspace{1em}

The second equivalence class is obtained by considering points with affine dimension equal to the ambient dimension and $K \geq 2d$ (note that if $K < 2d$, $\mN_\mathbf{0}^T$ always has a nullspace). The proof consists in finding a non-singular linear transformation $\mathbf{T} \in \mathbb{R}^{d \times d}$ between the normals, $\vm_j = \mathbf{T} \vn_j$, equivalently, we assume a linear dependence between the columns of the matrix $\mathbf{N}_\mathbf{0}^T$. Importantly, $\mT$ does \emph{not} have to be an orthogonal matrix. Moreover, it has $9$ degrees of freedom for $d=3$, which results in two cases: a simple calculation shows that for $K<9$, we find an infinite number of equivalent rooms for \textit{any} arbitrary room, whereas for $K \geq 9$, ambiguities arise for some particular rooms only, a set of measure zero. An instance for $K = 6$ is illustrated as follows:

\vspace{1em}
\centerline{\includegraphics[width=7cm]{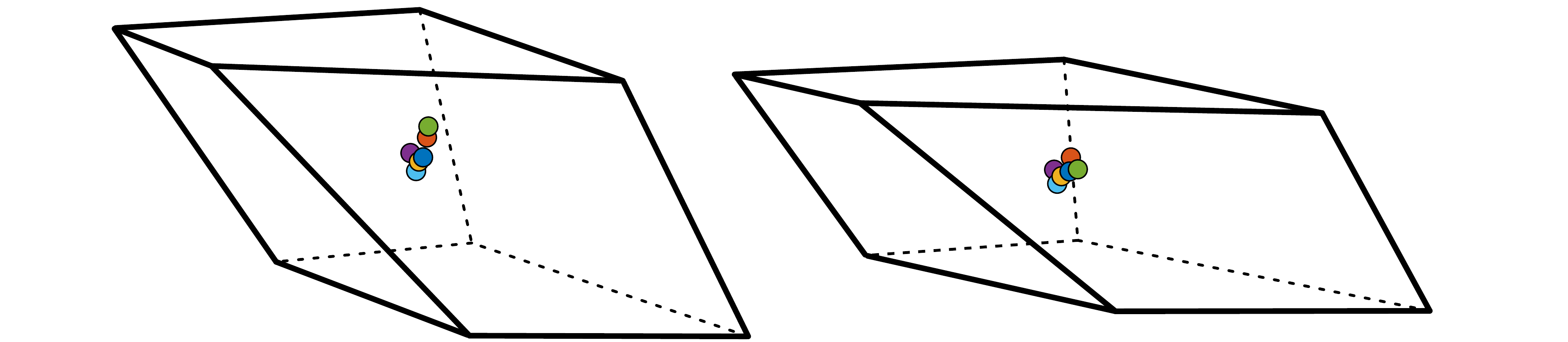}}
\vspace{1em}

The third class is derived by assuming linear dependence among the rows of the matrix $\mathbf{N}_\mathbf{0}^T$. We fix $L < 2d$ linearly independent rows, so that we obtain a fat matrix that always has a nonempty nullspace. Then, we define new rows (i. e. new wall normals $\vn_j$ and $\vm_j$, $L < j \leq K$), as a linear combination of the fixed ones. An example of two rooms from this class is

\vspace{1em}
\centerline{\includegraphics[width=7cm]{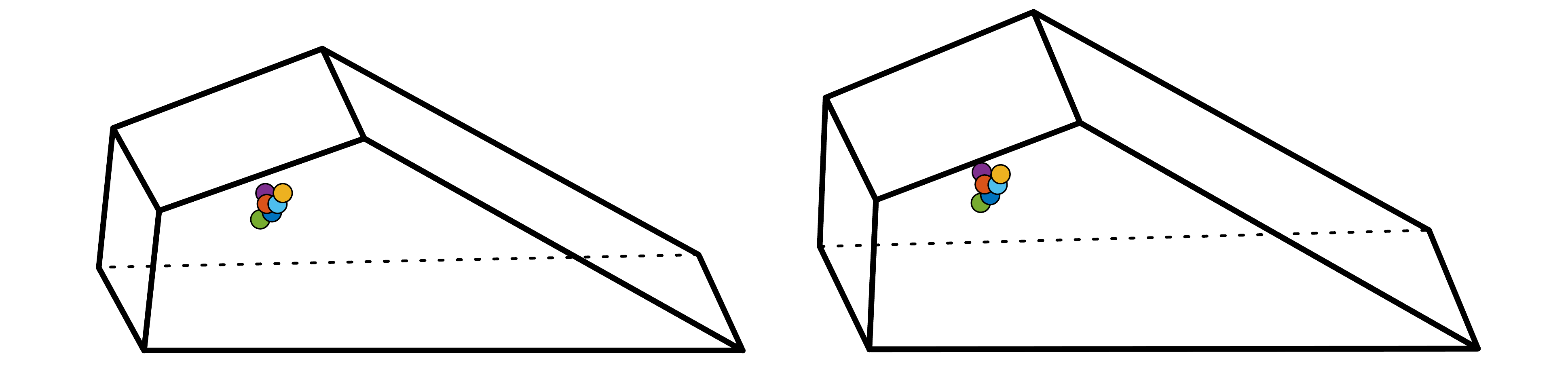}}
\vspace{1em}

To summarize, in this section we explicitly assumed linear dependence between the columns and rows of $\mN_0^T$. It can be shown that the above analysis exhausts all cases when the nullspace of the matrix $\mathbf{N}_\mathbf{0}^T$ is nonempty for $w_j = 0$, $j \leq K$ which lead to valid rooms. Together with Lemma \ref{lemma1}, the above analysis proves

 \begin{figure*}[]
 \vspace{-1em}
 \begin{subfigure}[h]{0.24\linewidth}
  \centering
    \centerline{\includegraphics[width=4cm]{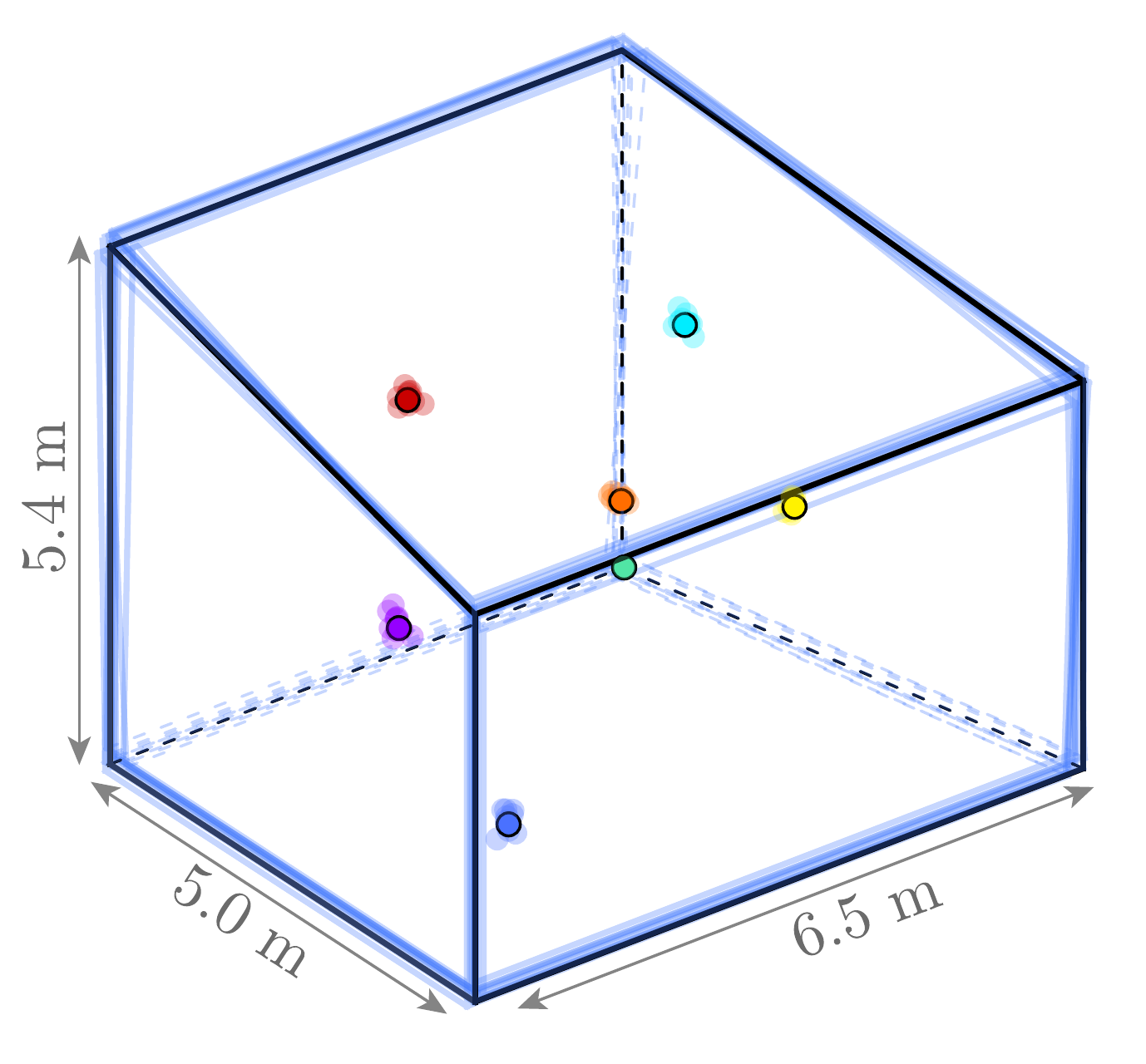}}
    \caption{$\mathcal{N}(0, 0.05^2)$}
 \end{subfigure}
  \begin{subfigure}[h]{0.24\linewidth}
  \centering
    \centerline{\includegraphics[width=4cm]{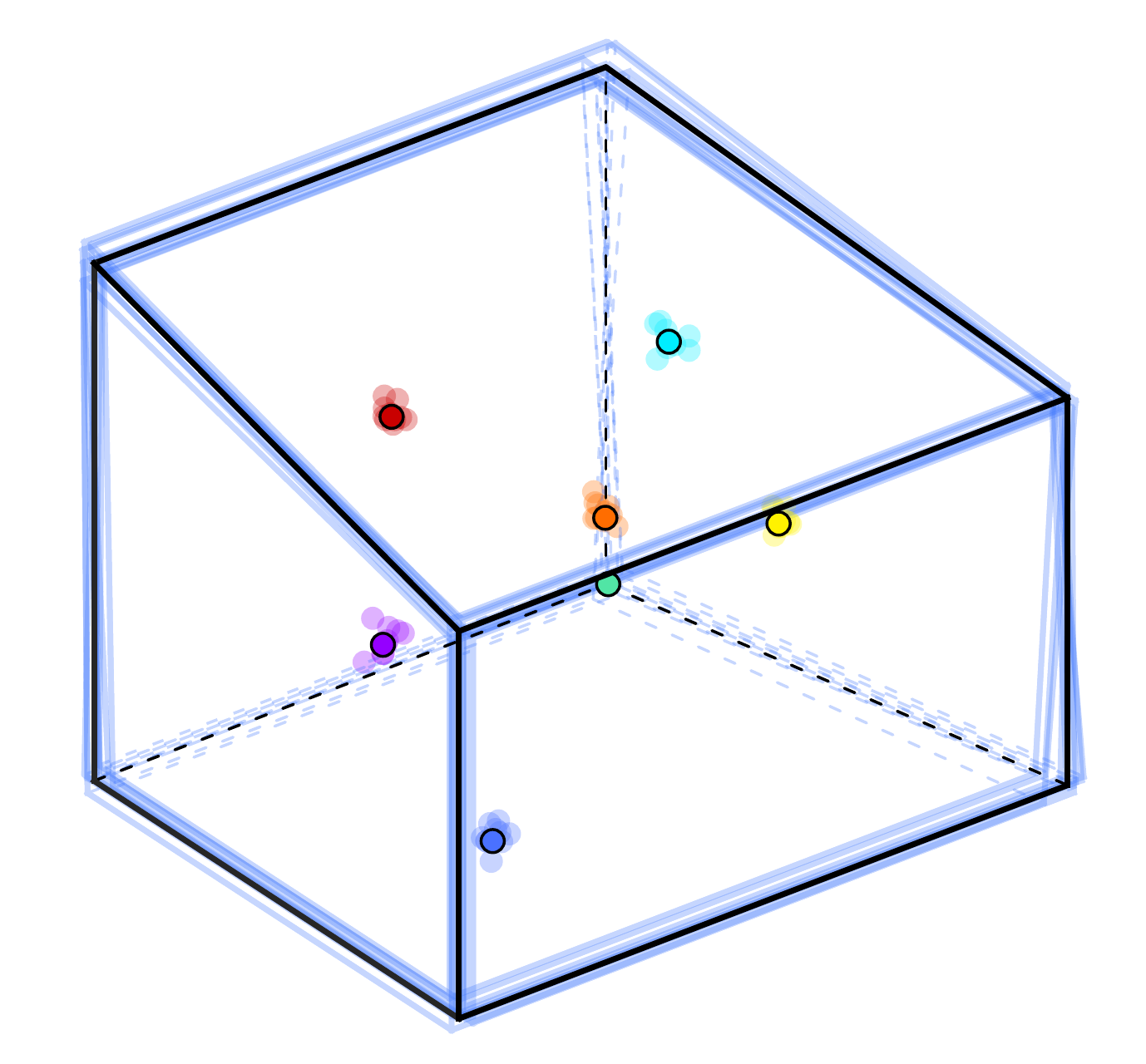}}
    \caption{$\mathcal{N}(0, 0.10^2)$}
 \end{subfigure}
 \begin{subfigure}[h]{0.24\linewidth}
  \centering
    \centerline{\includegraphics[width=4cm]{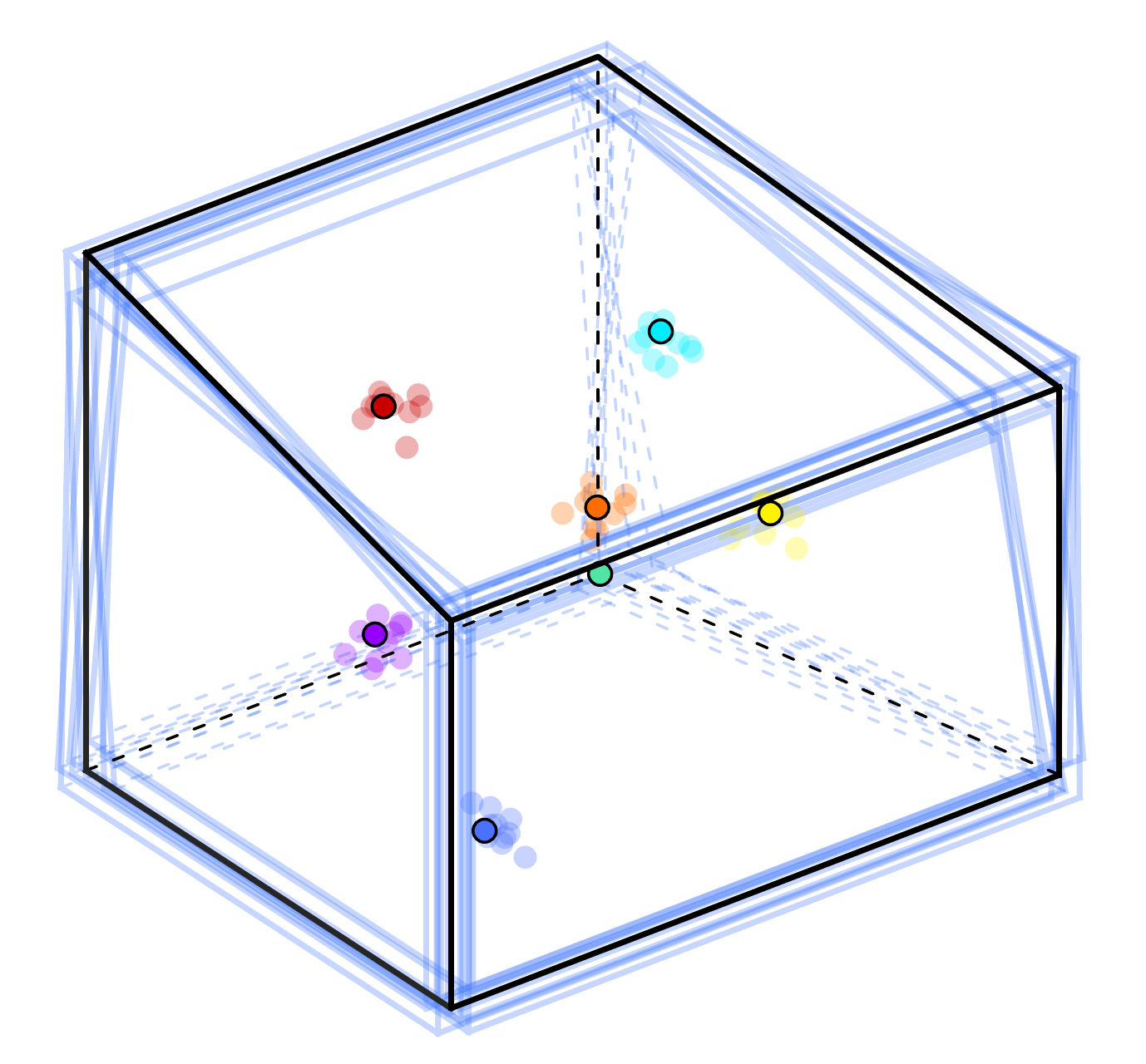}}
    \caption{$\mathcal{N}(0, 0.15^2)$}
 \end{subfigure}
 \begin{subfigure}[h]{0.24\linewidth}
  \centering
  \centerline{\includegraphics[width=4cm]{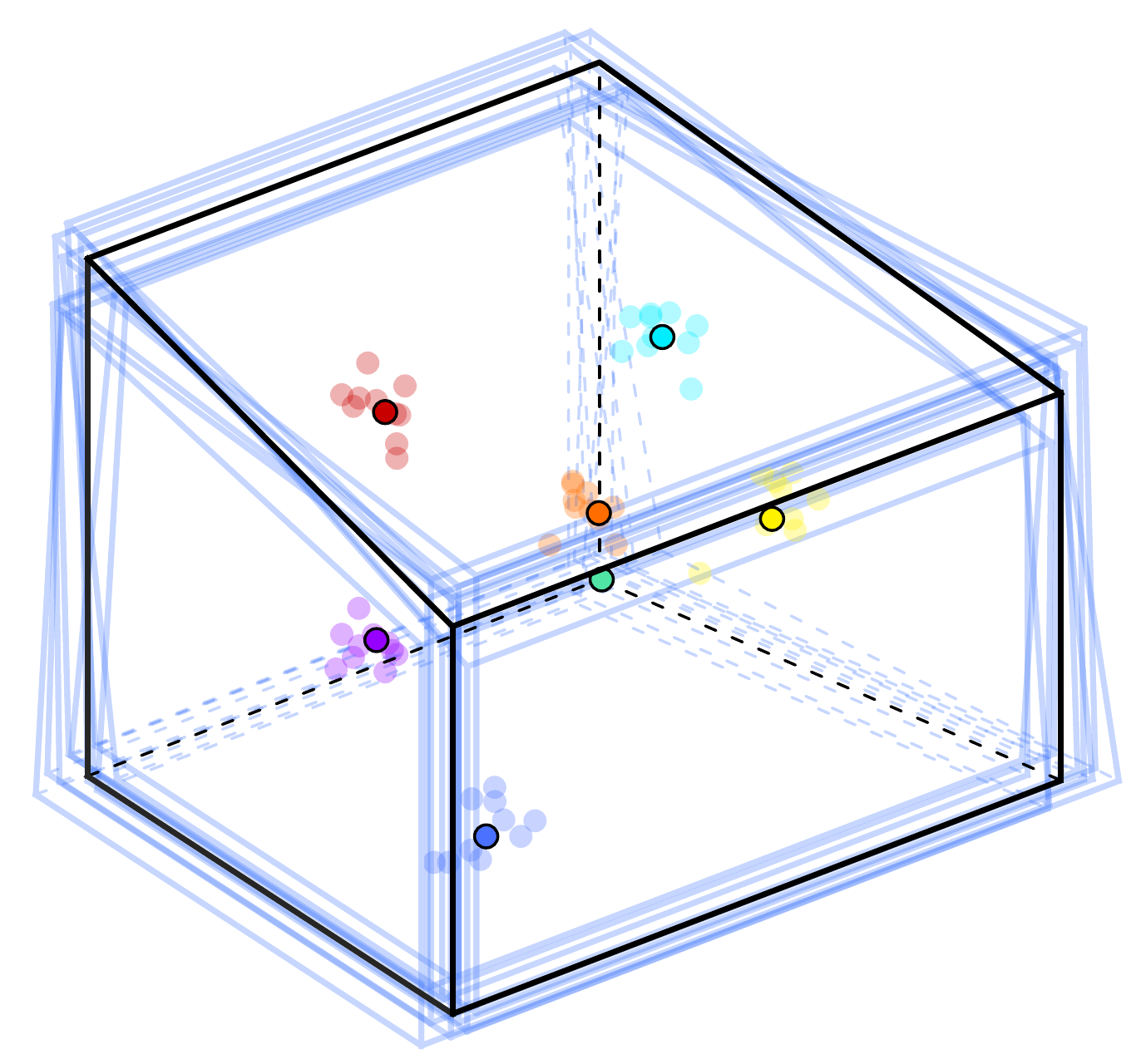}}
  \caption{$\mathcal{N}(0, 0.20^2)$}
 \end{subfigure}
  \vspace{-0.5em}
 \caption{Reconstruction experiments with increasing measurement noise variances.}
 \vspace{-1em}
 \label{fig:noise_room}
\end{figure*}
\begin{theorem}
In 2D, given a room-trajectory pair, we can find another one generating the same PPDM if and only if the waypoints are collinear or the lines enclose a parallelogram. In 3D, given a room-trajectory pair with $K < 9$, we can always find another one generating the same PPDM. For this to happen for $K \geq 9$ the waypoints must be co-planar, or the room must belong to a particular set of measure zero.
\label{thm:first}
\end{theorem}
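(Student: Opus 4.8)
The plan is to run everything through Lemma~\ref{lemma1}: two setups $\mathcal{R}_1,\mathcal{R}_2$ share a PPDM exactly when $\mathbf{R}_0^T\mathbf{N}_0=\mathbf{0}$, i.e.\ the column space of $\mathbf{N}_0$ spanned by the stacked normals $[\vn_j;\vm_j]$ is orthogonal, inside $\mathbb{R}^{2d}$, to the column space of $\mathbf{R}_0$ spanned by the stacked waypoints $[\vr_i;-\vs_i]$. Rigid motions (Lemma~\ref{lem:rigid}) always furnish one family of solutions, so the theorem is really the question of when a \emph{second}, non-rigid, solution exists. I would organize the proof by the affine dimension $a$ of the waypoints and by the wall count $K$, since these control the ranks of $\mathbf{R}_0$ and $\mathbf{N}_0$ and hence the slack in the orthogonality constraint \eqref{eq:rono}.

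In the degenerate branch $a<d$ (collinear waypoints in 2D; collinear or co-planar in 3D) the column space of $\mathbf{R}_0$ is rank-deficient, so its orthogonal complement is large enough to accommodate normals $\{\vm_j\}$ pointing in directions genuinely different from $\{\vn_j\}$; one then solves the always-solvable offset equations \eqref{eq:side_simple} to complete a distinct valid room. This produces the ``collinear'' clause in 2D and the ``co-planar'' clause in 3D. In the full-dimensional branch $a=d$, the column space of $\mathbf{R}_0$ is as large as the waypoints permit, and orthogonality ties the two normal sets rigidly: because $\{\vn_j\}$ spans $\mathbb{R}^d$, the relation $\langle\vr_i,\vn_j\rangle=\langle\vs_i,\vm_j\rangle$ can hold for all $i,j$ only through a single invertible $\mathbf{T}$ with $\vm_j=\mathbf{T}\vn_j$ and $\vs_i=\mathbf{T}^{-T}\vr_i$. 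The whole problem thus collapses to a statement about $\mathbf{T}$: orthogonal $\mathbf{T}$ merely reproduce rigid motions, so a genuinely new room exists iff there is a \emph{non-orthogonal} $\mathbf{T}$ with $\|\mathbf{T}\vn_j\|=1$ for every $j$.

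The counting then finishes the argument. Writing $\mathbf{S}=\mathbf{T}^T\mathbf{T}-\mathbf{I}$, the unit-norm requirements become $\vn_j^T\mathbf{S}\vn_j=0$, a system of $K$ \emph{linear} conditions on the $d(d+1)/2$-dimensional space of symmetric matrices, with the trivial solution $\mathbf{S}=\mathbf{0}$ being the rigid case (equivalently, the nine free parameters of $\mathbf{T}$ in 3D against $K$ constraints). In 2D the symmetric space is three-dimensional and a nonzero $\mathbf{S}$ survives precisely when the walls use at most two independent normal directions, i.e.\ enclose a parallelogram, while three or more generic directions leave only $\mathbf{S}=\mathbf{0}$. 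In 3D the same system has a nonzero solution for \emph{every} room when $K$ is below the stated threshold, and above it a nonzero $\mathbf{S}$ forces the normals onto a common quadric, a measure-zero condition; the row-dependence family flagged in the text must be merged into this same picture and shown to contribute nothing new.

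I expect the difficulty to be twofold. The analytic crux is the bookkeeping in the final step: the constraints $\|\mathbf{T}\vn_j\|=1$ hold identically on the whole orthogonal group, so they degenerate along the rigid directions, and one must carefully quotient out these ever-present solutions to isolate the genuinely new ones and to fix the precise threshold and the ``measure zero'' clause. The structural obstacle is \emph{exhaustiveness}: turning the characterization into an if-and-only-if requires proving that low affine dimension, the single-$\mathbf{T}$ column-dependence family, and the row-dependence family together exhaust \emph{every} way for $\mathbf{N}_0^T$ to acquire a nontrivial nullspace realizable by a valid room, which is exactly the ``straightforward but cumbersome'' case analysis that here has to be carried through in full.
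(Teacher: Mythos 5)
Your skeleton is the same as the paper's: reduce everything to Lemma~\ref{lemma1}, split on the affine dimension of the waypoints, show that in the full-dimensional branch every ambiguity is generated by a single invertible $\mathbf{T}$ with $\vm_j=\mathbf{T}\vn_j$, $\vs_i=\mathbf{T}^{-T}\vr_i$, and finish by counting constraints on $\mathbf{T}$. Your derivation that the full-dimensional branch \emph{forces} this form (and that the paper's third, row-dependence class then collapses into it) is actually cleaner than the paper's sketch, which only presents the $\mathbf{T}$-family as a construction.

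The gap is in the counting step, and it is exactly the step that fixes the theorem's threshold. You pass from $\mathbf{T}$ to $\mathbf{S}=\mathbf{T}^T\mathbf{T}-\mathbf{I}$ and observe, correctly, that the wall constraints $\norm{\mathbf{T}\vn_j}=1$ are the linear conditions $\vn_j^T\mathbf{S}\vn_j=0$ on the space of symmetric matrices, with $\mathbf{S}=\mathbf{0}$ corresponding precisely to the rigid solutions. But in 3D that space has dimension $6$, not $9$: your system has a nonzero solution for \emph{every} room only when $K<6$, and for $K\geq 6$ a generic room (one whose outer products $\vn_j\vn_j^T$ span the symmetric matrices) admits only $\mathbf{S}=\mathbf{0}$. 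So your argument proves a $K<6$ versus $K\geq6$ dichotomy, not the stated $K<9$ versus $K\geq9$ one, and your parenthetical claim that the $\mathbf{S}$-count is ``equivalently'' the nine free parameters of $\mathbf{T}$ against $K$ constraints is false: the constraints depend on $\mathbf{T}$ only through $\mathbf{T}^T\mathbf{T}$, so they can never impose more than $6$ independent conditions however large $K$ is. The paper's own sketch obtains the stated threshold precisely because it counts the $9$ degrees of freedom of $\mathbf{T}$ naively against $K$ constraints; your refinement---quotienting out the orthogonal group first---is exactly what changes the answer, and you cannot have both. Note, too, that your write-up is internally inconsistent across dimensions: your 2D conclusion (parallelogram if and only if, with no ``$K<4$ always ambiguous'' clause) follows from the symmetric-matrix count, while your 3D conclusion defers to ``the stated threshold'' which only the naive count produces. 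To stand as a proof of this statement you must either exhibit, for a generic room with $6\leq K<9$, an ambiguity that is not of the single-$\mathbf{T}$ form---which your own full-dimensional-branch analysis rules out---or else concede that your argument yields a different threshold than the one claimed.
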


Therefore, to achieve uniqueness in 3D with $K < 9$, we must add additional information. This could be the distance between several consecutive waypoints or some prior knowledge about the room.

\section{Practical algorithm}

We formulate the joint recovery of points and planes as an
optimization problem, as in our prior work \cite{krekovic}. Noisy measurements are given as
$ \widetilde{d}_{i,j} = d_{i,j} + \epsilon_{i,j}$
where $\epsilon_{i,j}$ is the noise. It is natural to seek the best estimate of the unknown vectors by solving
\begin{align}
\label{eq:cost_function}
& \underset{
\begin{subarray}
\text{ \text{  } \text{  } } q_{j}, \vn_{j}, \vr_{i} \\ i \leq N, j \leq K 
\end{subarray}} {\text{minimize}}
& & \sum_{i = 1}^{N} \sum_{j=1}^{K} (\widetilde{d}_{i,j} - q_{j} + \vn_{j}^{T} \vr_{i})^2 \nonumber \\
& \text{subject to}
& & \norm{\vn_{j}} = 1,\ j = 1, ..., K.
\end{align}
If noise is assumed to be iid normal, then the above program leads to the maximum likelihood estimate.

This cost function is not convex and minimizing it is a priori difficult due to many local minima.
However, different search methods have been developed that guarantee global
convergence in algorithms for nonlinear programming (NLP). 
In particular, the cost function \eqref{eq:cost_function} is suitable for the interior-point filter line-search algorithm for large-scale nonlinear programming (IPOPT) proposed in \cite{wachter1} and implemented in an open-source package as part of COIN-OR Initiative \cite{coinor}. By relying on IPOPT we get a guarantee on global convergence under appropriate (mild) assumptions. Exhaustive computer simulations suggest that the method efficiently (in milliseconds) finds the optimal solution in all test cases.

\section{Numerical simulations}
\label{sec:numerical}

We performed a number of computer simulations to analyze the effect of noise on the success of the reconstruction, both in 2D and 3D. In consideration of space, we only present the 3D reconstructions. To simulate uncertainties in the measurements, we add iid Gaussian noise to the calculated distances and provide them as input to the proposed algorithm.

Figure \ref{fig:noise_room} illustrates joint reconstructions for different variances of Gaussian noise $\mathcal{N}(0, \sigma^2)$, with indicated room dimensions. The original room is colored black, while the original measurement locations are depicted with bordered circles. To generate Figure \ref{fig:noise_room}, we performed 10 experiments for $\sigma = 0.05$, $\sigma = 0.1$, $\sigma = 0.15$ and $\sigma = 0.2$, and overlaid the estimates in lighter shades. To achieve a unique reconstruction, we fixed two normals, i.e. we assumed that the floor and one wall are known, which is a realistic assumption in practice. 
The room is non-shoebox with wall lengths denoted in the figure. 
The distance measurement error with standard deviation $\sigma = 0.2$ [m] is much larger than what is achievable using even simple hardware in a room of such dimensions; nevertheless, the original room-trajectory pair is accurately reconstructed.

Figure \ref{fig:reconstruction} shows the dependency of the estimation errors on the standard deviation of the noise.
The standard deviation is increasing from 0 to 0.22 with steps 0.02 and for each value we performed 5000 experiments. The average SNR is indicated above the graph. 
A room estimation error, defined as the average Euclidean distance between the original and reconstructed room vertices, is plotted in blue (circles).
An estimation error of the measurement locations, defined as the average Euclidean distance between the original and reconstructed measurement locations, is plotted in red (squares). We can see that the reconstruction is stable and the error depends linearly on the noise level.

\begin{figure}[H]
 \vspace{-1em}
  \centering
  \centerline{\includegraphics[width=5.3cm]{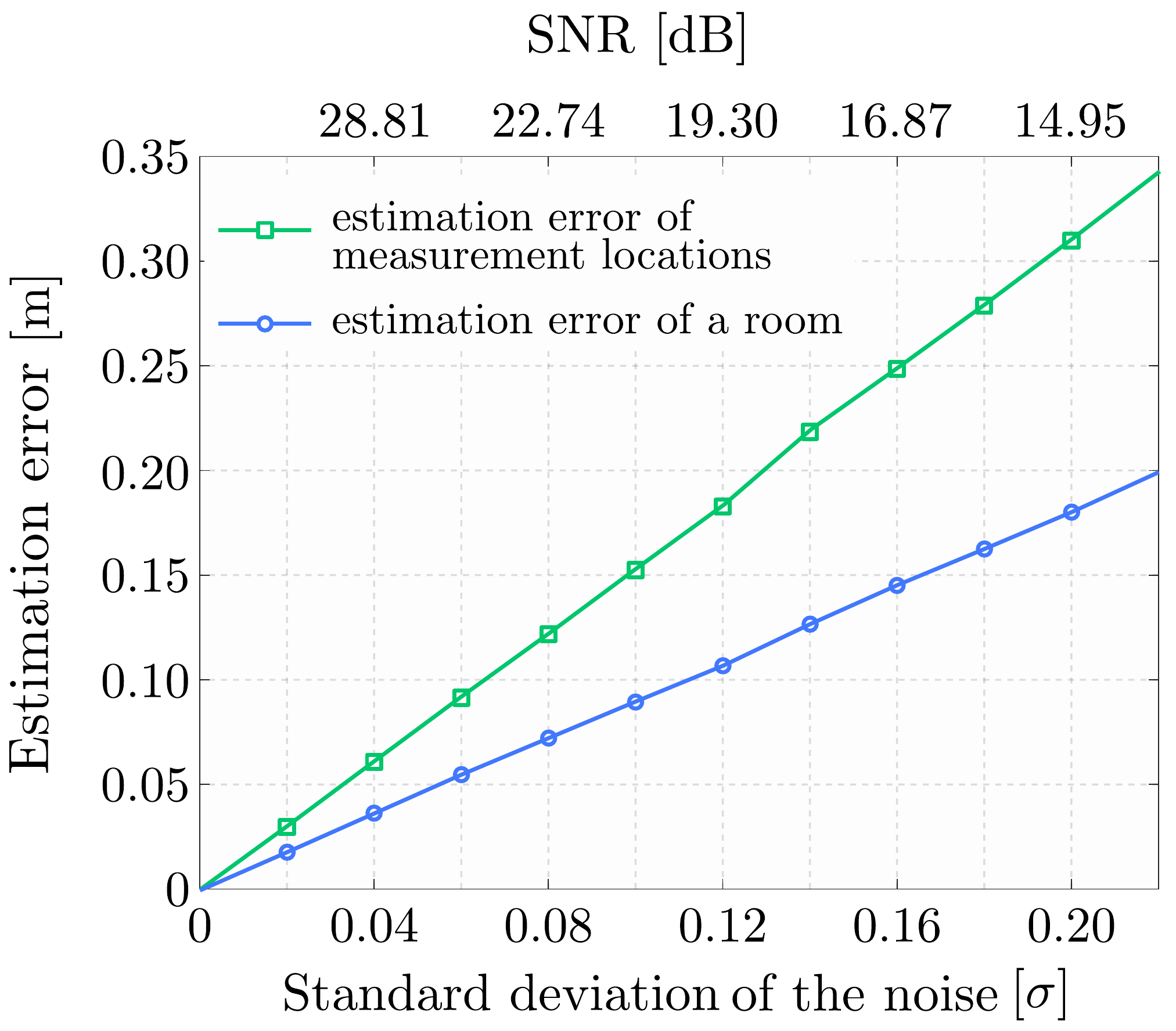}}
 \caption{Dependence of the estimation errors on noise level.}
 \vspace{-1.2em}
 \label{fig:reconstruction}
\end{figure}
 
\section{Conclusion}
\label{sec:conclusion}

We presented an algorithm for reconstructing the 2D and 3D geometry of a room from first-order echoes.
It requires a single device equipped with an omnidirectional microphone and a loudspeaker. 
We noticed that such a setup is an instantiation of a more general inverse problem---reconstruction of the original points and planes from their noisy pairwise distances.
We investigated the uniqueness of this inverse problem and found conditions that guarantee uniqueness. 
We stated our problem as a non-convex optimization problem and proposed a fast optimization tool which simultaneously estimates the planes and points.
Through extensive numerical experiments we showed that our method is robust to noise.
Currently, we are undertaking real experiments, so that the ongoing research includes the verification of our method with real RIRs.

\bibliographystyle{IEEEbib}

\begin{thebibliography}{1}

\bibitem{thrun}
S. Thrun , W. Burgard , D. Fox, \emph{Probabilistic robotics (Intelligent robotics and autonomous agents)}, The MIT Press, 2005

\bibitem{meissner}
E. Leitinger, P. Meissner, M. Lafer, and K. Witrisal, \emph{Simultaneous localization and mapping using multipath channel information}, in IEEE ICC 2015 Workshop on Advances in Network Localization and Navigation (ANLN), June 2015.

\bibitem{dokthesis}
I. Dokmani\'c, M. Vetterli (Dir.). \emph{Listening to Distances and Hearing Shapes: Inverse Problems in Room Acoustics and Beyond}, EPFL, Lausanne, 2015.

\bibitem{dokmanicdaudet}
I. Dokmani\'c, L. Daudet, and M. Vetterli, \emph{From acoustic room reconstruction to SLAM}, in IEEE International Conference on Acoustics, Speech and Signal Processing (ICASSP), pp. 6345-6349, March 2016.

\bibitem{antonacci}
F. Antonacci, J. Filos, M. R. P. Thomas, E. A. P. Habets, A. Sarti, P. A. Naylor, and S. Tubaro, \emph{Inference of room geometry from acoustic impulse responses}, in IEEE Transactions on Audio, Speech, and Language Processing, vol. 20, no. 10, pp. 2683-2695, July 2012.

\bibitem{ribeiro}
F. Ribeiro, D. A. Florencio, D. E. Ba, C. Zhang, \emph{Geometrically constrained room modeling with compact microphone arrays}, in IEEE Transactions on Audio, Speech, and Language Processing, pp. 1449-1460, July 2012.

\bibitem{hu}
J. S. Hu, C. Y. Chan, C. K. Wang, M. T. Lee, and C. Y. Kuo, \emph{Simultaneous localization of a mobile robot and multiple sound sources using a microphone array}, in IEEE International Conference on Advanced Robotics, vol. 25, no. 1-2, pp. 135-152, 2011.

\bibitem{liberti}
L. Liberti, C. Lavor, N. Maculan, and A. Mucherino, \emph{Euclidean distance geometry and applications}, SIAM Rev., vol. 56, no. 1, pp. 3-69, 2014.

\bibitem{gaffke}
N. Gaffke and R. Mathar, \emph{A cyclic projection algorithm via duality}, Metrika, vol. 36, no. 1, pp. 29-54, 1989.

\bibitem{reza}
R. Parhizkar, \emph{Euclidean distance matrices: Properties, algorithms and applications}, Ph.D. dissertation, Ecole Polytechnique Federale de Lausanne (EPFL), 2013.

\bibitem{browne}
M. Browne, \emph{The Young-Householder algorithm and the least squares multidimensional scaling of squared distances}, J. Classif., vol. 4, no. 2, pp. 175-190, 1987.

\bibitem{krekovic}
M. Krekovi\'c, I. Dokmani\'c, and M. Vetterli, \emph{Look, no beacons! Optimal all-in-one EchoSLAM}, submitted to Asilomar Conference on Signals, Systems, and Computers.

\bibitem{krekovic2}
M. Krekovi\'c, I. Dokmani\'c, and M. Vetterli, \emph{EchoSLAM: Simultaneous localization and mapping with acoustic echoes}, in IEEE International Conference on Acoustics, Speech and Signal Processing (ICASSP), pp. 11-15, March 2016.

\bibitem{peng}
F. Peng, T. Wang, and B. Chen \emph{Room shape reconstruction with a single mobile acoustic sensor}, in IEEE Global Conference on Signal and Information Processing (GlobalSIP), pp. 1116-1120, Dec. 2015.

\bibitem{chen}
Y. Chen, A. Jalali, S. Sanghavi, and C. Caramanis, \emph{Low-rank matrix recovery from errors and erasures}, IEEE Transactions on Information Theory, vol. 59, no. 7, pp. 4324-4337, 2013.

\bibitem{krekovic_journal}
M. Krekovi\'c, I. Dokmani\'c, and M. Vetterli \emph{Omnidirectional Bats}, In preparation.

\bibitem{wachter1}
A. W{\"a}chter, L. T. Bilegler, \emph{On the implementation of an interior-point filter line-search algorithm for large-scale nonlinear programming}, in Mathematical Programming, vol. 106, no. 1, pp. 25-57, 2006.

\bibitem{coinor}
R. Lougee-Heimer, \emph{The Common Optimization INterface for Operations Research}, IBM Journal of Research and Development, vol. 47(1):57-66, January 2003. 


\end{thebibliography}

\balance
\end{document}